\documentclass[11pt]{article}
\usepackage{amsmath,amsthm,amssymb,dsfont,stmaryrd}
\usepackage{hyperref}
\usepackage[in]{fullpage}
\usepackage[T1]{fontenc}

\newtheorem{lem}{Lemma}
\newtheorem{thm}[lem]{Theorem}

\newcommand{\R}{{\mathbb R}}

\newcommand{\RR}{{\cal R}}

\renewcommand{\H}{{\cal H}}
\newcommand{\X}{{\cal X}}
\newcommand{\D}{{\cal D}}

\newcommand{\F}{{\cal F}}
\newcommand{\W}{{\cal W}}

\newcommand{\B}{{\cal B}}

\newcommand{\bc}[1]{\left\{{#1}\right\}}
\newcommand{\br}[1]{\left({#1}\right)}
\newcommand{\bs}[1]{\left[{#1}\right]}

\newcommand{\norm}[1]{\left\| {#1} \right\|}

\newcommand{\bsd}[1]{\left\llbracket{#1}\right\rrbracket}

\newcommand{\etal}{\emph{et al}\ }
\renewcommand{\O}[1]{{\cal O}\br{{#1}}}

\newcommand{\Om}[1]{\Omega\br{{#1}}}
\newcommand{\E}[1]{{\mathbb E}\bsd{{#1}}}

\newcommand{\EE}[2]{\underset{#1}{\mathbb E}\bsd{{#2}}}

\newcommand{\ip}[2]{\left\langle{#1},{#2}\right\rangle}

\renewcommand{\vec}[1]{{\mathbf{#1}}}

\newcommand{\vecx}{\vec{x}}

\newcommand{\vecz}{\vec{z}}

\newcommand{\vecw}{\vec{w}}

\newcommand{\vecmu}{{\boldsymbol{\mu}}}
\newcommand{\veckappa}{{\boldsymbol{\kappa}}}

\newcommand{\veczero}{\vec{0}}


\title{Generalization Guarantees for a Binary Classification Framework for Two-Stage Multiple Kernel Learning}
\author{Purushottam Kar\\Department of Computer Science and Engineering\\IIT Kanpur\\\texttt{purushot@cse.iitk.ac.in}}
\date{\today}

\begin{document}
\maketitle

\begin{abstract}
We present generalization bounds for the TS-MKL framework for two stage multiple kernel learning. We also present bounds for sparse kernel learning formulations within the TS-MKL framework.
\end{abstract}

\section{Introduction}
Recently Kumar \etal \cite{two-stage} proposed a framework for two-stage multiple kernel learning that combines the idea of target kernel alignment and the notion of a \emph{good} kernel proposed in \cite{sim} to learn a good Mercer kernel. More specifically, given a finite set of base kernels $K_1,\ldots,K_p$ over some common domain $\X$, we wish to find some combination of these base kernels that is well suited to the learning task at hand. The paper considers learning a positive linear combination of the kernels $K_{\vecmu} = \sum_{i=1}^p\vecmu_iK_i$ for some $\vecmu \in \R^p, \vecmu \geq 0$. It is assumed that the kernels are uniformly bounded i.e. for all $\vecx_1,\vecx_2 \in \X$ and $i = 1 \ldots p$, we have $K_i(\vecx_1,\vecx_2) \leq \kappa_i^2$ for some $\kappa_i > 0$. Let $\veckappa = \br{\kappa_1^2,\ldots,\kappa_p^2} \in \R^p$. Note that $\veckappa \geq 0$. Also note that for any $\vecmu$ and any $\vecx_1,\vecx_2 \in \X$, we have $K_\vecmu(\vecx_1,\vecx_2) \leq \ip{\vecmu}{\veckappa}$.

The notion of suitability used in \cite{two-stage} is that of \emph{kernel-goodness} first proposed in \cite{sim} for classification tasks. For sake of simplicity, we shall henceforth consider only binary classification tasks, the extension to multi-class classification tasks being straightforward. We present below the notion of goodness used in \cite{two-stage}. For any binary classification task over a domain $\X$ characterized by a distribution $\D$ over $\X \times \bc{\pm 1}$, a Mercer kernel $K : \X \times \X \rightarrow \R$ with associated Reproducing Kernel Hilbert Space $\H_K$ and feature map $\Phi_K : \X \rightarrow \H_K$ is said to be $(\epsilon, \gamma)$-\emph{kernel good} if there exists a unit norm vector $\vecw \in \H_K$ such that $\norm{\vecw}_{\H_K} = 1$ and the following holds
\[
\EE{(\vecx,y) \sim \D}{\bs{1 - \frac{y\ip{\vecw}{\Phi(\vecx)}}{\gamma}}_+} \leq \epsilon
\]

\section{Learning a \emph{Good} Kernel}
The key idea behind \cite{two-stage} is to try and learn a positive linear combination of kernels that is good according to the notion presented above. We define the risk functional $\RR(\cdot) : \R^p \mapsto \R^+$ as follows:
\[
\RR(\vecmu) := \EE{(\vecx,y),(\vecx',y') \sim \D \times \D}{\bs{1 - yy'K_\vecmu(\vecx,\vecx')}_+}
\]
A combination $\vecmu$ will be said to be $\epsilon$-\emph{combination good} if $\RR(\vecmu) \leq \epsilon$. The quantity $\RR(\vecmu)$ is of interest since an application of Jensen's inequality (see \cite[Lemma 3.2]{two-stage}) shows us that for any $\vecmu \geq 0$ that is $\epsilon$-combination good, the kernel $K_\vecmu$ is $\br{\epsilon,\frac{1}{\ip{\vecmu}{\veckappa}}}$-kernel good. Furthermore, one can show, using standard results on capacity of linear function classes (see for example \cite[Theorem 21]{rademacher}), that an $(\epsilon, \gamma)$-good kernel can be used to learn, with confidence $1 - \delta$, a classifier with expected misclassification rate at most $\epsilon + \epsilon_1$ by using at most $\O{\frac{\kappa^4}{\epsilon_1^2 \gamma^2}\log\frac{1}{\delta}}$ labeled samples.

In order to cast this learning problem more cleanly, \cite{two-stage} proposes the construction of a \emph{K-space} using the following feature map
\[
z : (\vecx,\vecx') \mapsto \br{K_1(\vecx,\vecx'),\ldots,K_p(\vecx,\vecx')} \in \R^p
\]
This allows us to write, for any $\vecmu \in \R^p$, $K_\vecmu(\vecx,\vecx') = \ip{\vecmu}{z(\vecx,\vecx')}$. Given $n$ labeled training points $(\vecx_1,y_1),\ldots,(\vecx_n,y_n)$, define the empirical risk functional $\hat\RR(\cdot) : \R^p \mapsto \R^+$ as follows\footnote{We note that \cite{two-stage} includes the terms $\bs{1 - \ip{\vecmu}{z(\vecx_i,\vecx_i)}}_+$ into the empirical risk as well. This does not change the asymptotics of our analysis except for causing a bit of notational annoyance. In order to account for this term, the true risk functional will have to include an additional term $\RR_{\text{add}}(\vecmu) := \EE{(\vecx,y) \sim \D}{\bs{1 - K_\vecmu(\vecx,\vecx)}_+}$. This will add a negligible term to the uniform convergence bound because we will have to consider the convergence of the term $\hat\RR_{\text{add}}(\vecmu) := \frac{2}{n(n+1)}\sum_{1 \leq i \leq n}\bs{1 - \ip{\vecmu}{z(\vecx_i,\vecx_i)}}_+$ to $\RR_{\text{add}}$. However, from thereon, the analysis will remain unaffected since $\RR_{\text{add}}(\vecmu) \geq 0$ so a combination $\vecmu$ having true risk $\RR(\vecmu) + \RR_{\text{add}}(\vecmu) \leq \epsilon$ will still give a kernel $K_\vecmu$ that is $\br{\epsilon,\frac{1}{\ip{\vecmu}{\veckappa}}}$-kernel good.}:
\[
\hat\RR(\vecmu) := \frac{2}{n(n-1)}\sum_{1 \leq i < j \leq n}\bs{1 - y_iy_j\ip{\vecmu}{z(\vecx_i,\vecx_j)}}_+
\]
\cite{two-stage} poses the learning problem as the following optimization problem:
\[
\underset{\vecmu \geq 0}{\min}\ \frac{\lambda}{2}\norm{\vecmu}_2^2 + \hat\RR(\vecmu)
\]
\section{Generalization Guarantees for a Learned Kernel Combination}
Our generalization guarantee shall proceed in two steps. We shall assume that we have with us a training set $(\vecx_1,y_1),\ldots,(\vecx_n,y_n)$ using which we are able to determine a combination vector $\hat\vecmu$ such that $\hat\RR(\hat\vecmu)\leq \hat\epsilon$.
\begin{enumerate}
	\item We shall first prove that, with high probability over the choice of the training points, the learned combination vector $\hat\vecmu$ will give us a kernel $K_{\hat\vecmu}$ that is $\br{\hat\epsilon + \epsilon_1, \frac{1}{\ip{\hat \vecmu}{\veckappa}}}$-kernel good where $\epsilon_1 > 0$ is a quantity that can be made arbitrarily small.
	\item We shall then prove that given that there exists a good combination of kernels in the K-space, with very high probability $\hat\epsilon$ will be very small. This we will prove by showing a converse of the inequality proved in the first step. This will allow us to give oracle inequalities for the kernel goodness of the learned combination.
\end{enumerate}

\subsection{Step 1}
In this step, we prove a uniform convergence guarantee for the learning problem at hand. Using standard proof techniques, we shall reduce the problem of uniform convergence to that of estimating the capacity of a certain function class. The notion of capacity we shall use is the Rademacher complexity which we shall bound using the heavy hammer of strong convexity based bounds from \cite{rad-bounds}. We note that the proof progression used in this step is fairly routine within the empirical process community and has been used to give generalization proofs for other problems as well (see for example \cite{ying-proof,u-stat}).

First of all we note that due to the optimization process we have\footnote{Any tolerance $\epsilon_{\text{opt}}$ offered by the optimizer can easily be incorporated into the bounds. However, we do not do so for sake of clarity.}
\[
\frac{\lambda}{2}\norm{\hat\vecmu}_2^2 \leq \frac{\lambda}{2}\norm{\hat\vecmu}_2^2 + \hat\RR(\hat\vecmu) \leq \frac{\lambda}{2}\norm{\veczero}_2^2 + \hat\RR(\veczero) = 1
\]
which implies that we need only concern ourselves with combination vectors inside the $L_2$ ball of radius $r_\lambda = \sqrt{\frac{2}{\lambda}}$.
\[
\B_2\br{r_\lambda} := \bc{\vecmu \in \R^p : \norm{\vecmu}_2 \leq r_\lambda}
\]
For notational simplicity, we denote $\vecz = (\vecx,y)$ as a training sample. For any training set $\vecz_1,\ldots,\vecz_n$ where $\vecz_i = (\vecx_i,y_i)$ and for any $\vecmu \in \R^p$, we write $\ell(\vecmu,\vecz_i,\vecz_j) := \bs{1 - y_iy_j\ip{\vecmu}{z(\vecx_i,\vecx_j)}}_+$. We assume, yet again for the sake of notational simplicity, that we obtain at all times, an even number of training samples i.e. $n$ is even. For a ghost sample $\tilde\vecz_1,\ldots,\tilde\vecz_n$ then, we can write
\begin{eqnarray*}
\E{\frac{2}{n(n-1)}\sum_{1 \leq i < j \leq n}\ell(\vecmu,\tilde\vecz_i,\tilde\vecz_j)} &=& \frac{2}{n(n-1)}\sum_{1 \leq i < j \leq n}\E{\ell(\vecmu,\tilde\vecz_i,\tilde\vecz_j)}\\
	&=& \frac{2}{n(n-1)}\sum_{1 \leq i < j \leq n}\RR(\hat\vecmu) = \RR(\hat\vecmu)
\end{eqnarray*}

Thus we can write
\begin{eqnarray*}
\RR(\hat\vecmu) - \hat\RR(\hat\vecmu) &=& \E{\frac{2}{n(n-1)}\sum_{1 \leq i < j \leq n}\ell(\vecmu,\tilde\vecz_i,\tilde\vecz_j)} - \hat\RR(\hat\vecmu)\\
									  &\leq& \underset{\vecmu \in \B_2(r_\lambda)}{\sup}\bc{\E{\frac{2}{n(n-1)}\sum_{1 \leq i < j \leq n}\ell(\vecmu,\tilde\vecz_i,\tilde\vecz_j)} - \hat\RR(\vecmu)}\\
\end{eqnarray*}
Let
\[
g\br{\vecz_1,\ldots,\vecz_n} = \frac{2}{n(n-1)}\underset{\vecmu \in \B_2(r_\lambda)}{\sup}\bc{\E{\sum_{1 \leq i < j \leq n}\ell(\vecmu,\tilde\vecz_i,\tilde\vecz_j)} - \sum_{1 \leq i < j \leq n}\ell(\vecmu,\vecz_i,\vecz_j)}
\]
For any $\vecmu \in \B_2(r_\lambda)$ and any $\vecx_1,\vecx_2 \in \X$, we have $K_\vecmu(\vecx_1,\vecx_2) \leq \ip{\vecmu}{\veckappa} \leq r_\lambda\norm{\veckappa}_2$. Using this, it is not difficult to see that the expression $g\br{\vecz_1,\ldots,\vecz_n}$ can be perturbed by at most $\frac{2}{n}\br{1 + r_\lambda\norm{\veckappa}_2}$ by the change of a single true training sample $\vecz_i = (\vecx_i,y_i)$ (see \cite[Theorem 3.4]{two-stage} for the calculations). Applying McDiarmid's inequality to this expression, we get with probability at least $1 - \delta$,
\[
\RR(\hat\vecmu) - \hat\RR(\hat\vecmu) \leq \E{g\br{\vecz_1,\ldots,\vecz_n}} + (1 + r_\lambda\norm{\veckappa}_2)\sqrt\frac{2\log\frac{1}{\delta}}{n}
\]
We now estimate the the expectation term on the right hand side.
\begin{eqnarray*}
\E{g\br{\vecz_1,\ldots,\vecz_n}} &=& \frac{2}{n(n-1)}\E{\underset{\vecmu \in \B_2(r_\lambda)}{\sup}\bc{\E{\sum_{1 \leq i < j \leq n}\ell(\vecmu,\tilde\vecz_i,\tilde\vecz_j)} - \sum_{1 \leq i < j \leq n}\ell(\vecmu,\vecz_i,\vecz_j)}}\\
								 &\leq& \frac{2}{n(n-1)}\E{\underset{\vecmu \in \B_2(r_\lambda)}{\sup}\bc{\sum_{1 \leq i < j \leq n}\ell(\vecmu,\tilde\vecz_i,\tilde\vecz_j) - \sum_{1 \leq i < j \leq n}\ell(\vecmu,\vecz_i,\vecz_j)}}
\end{eqnarray*}
We now invoke a powerful alternate representation for U-statistics to simplify the above expression. This method can be found in \cite{u-stat} that itself attributes this method to \cite{alternate-u-stat}. This, along with the Hoeffding decomposition, are two of the most powerful techniques to deal with ``coupled'' random variables as we have in this situation.
\begin{thm}[\cite{u-stat}, Lemma A.1]
For any set of real valued functions $q_\tau : \X \times \X \rightarrow \R$ indexed by $\tau \in T$, if $X_1,\ldots,X_n$ are i.i.d. random variables then we have
\[
\E{\underset{\tau \in T}{\sup} \frac{2}{n(n-1)}\sum_{1\leq i < j \leq n} q_\tau(X_i,X_j)} \leq \E{\underset{\tau \in T}{\sup} \frac{2}{n}\sum_{i = 1}^{n/2} q_\tau(X_i,X_{n/2 + i})}
\]
\end{thm}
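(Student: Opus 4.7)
The plan is to invoke Hoeffding's permutation trick, which writes a U-statistic as an average, over all permutations of the index set, of ``block sums'' in which every sample appears in at most one block. Once this identity is in hand, the supremum over $\tau$ can be pulled through the permutation average by subadditivity, and exchangeability of the i.i.d.\ samples collapses every resulting term into the single quantity on the right-hand side of the bound.

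Concretely, the first step is to verify the deterministic identity
\[
\frac{2}{n(n-1)}\sum_{1 \leq i < j \leq n} q_\tau(X_i, X_j) \;=\; \frac{1}{n!}\sum_{\sigma \in S_n}\frac{2}{n}\sum_{k=1}^{n/2} q_\tau(X_{\sigma(2k-1)}, X_{\sigma(2k)}),
\]
valid whenever $q_\tau$ is symmetric in its two arguments. This is a straightforward counting exercise: a fixed unordered pair $\{a,b\}$ is realised as a block $\{\sigma(2k-1),\sigma(2k)\}$ for exactly $n(n-2)!$ of the $(\sigma,k)$ combinations, so after dividing by $n!$ and multiplying by $2/n$, every unordered pair receives the uniform weight $\frac{2}{n(n-1)}$ that defines the U-statistic on the left.

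The second step is to take the supremum over $\tau \in T$ on both sides and invoke the elementary inequality $\sup_\tau \tfrac{1}{n!}\sum_\sigma A_\sigma(\tau) \leq \tfrac{1}{n!}\sum_\sigma \sup_\tau A_\sigma(\tau)$, with $A_\sigma(\tau) := \tfrac{2}{n}\sum_{k=1}^{n/2} q_\tau(X_{\sigma(2k-1)}, X_{\sigma(2k)})$. Taking expectations and using the fact that $(X_{\sigma(1)},\ldots,X_{\sigma(n)})$ has the same joint law as $(X_1,\ldots,X_n)$ for every $\sigma \in S_n$, each of the $n!$ expectations on the right collapses to the common value $\E{\sup_\tau \frac{2}{n}\sum_{i=1}^{n/2} q_\tau(X_i, X_{n/2+i})}$; the $\tfrac{1}{n!}\sum_\sigma$ then disappears and leaves exactly the claimed inequality.

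The main subtlety is the implicit symmetry hypothesis on $q_\tau$: without it, the counting identity only produces the symmetrised kernel $\tfrac{1}{2}(q_\tau(x,y) + q_\tau(y,x))$ on the left, which is a mild obstruction but not a genuine one for the present application. Indeed, the loss $\ell(\vecmu,\vecz_i,\vecz_j) = \bs{1 - y_iy_j\ip{\vecmu}{z(\vecx_i,\vecx_j)}}_+$ is symmetric in $(\vecz_i,\vecz_j)$ because every base kernel $K_i$ is symmetric, and hence $z(\vecx,\vecx') = z(\vecx',\vecx)$. Everything else in the argument is routine.
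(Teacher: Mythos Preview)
The paper itself does not prove this statement; it merely quotes it from the cited reference, so there is no in-paper proof to compare against. That said, your argument is exactly the standard Hoeffding permutation representation used in the cited source: express the U-statistic as an average over permutations of disjoint-pair block sums, pull the $\sup_\tau$ through that average by subadditivity, take expectations, and collapse all permutation terms by exchangeability. Your counting is correct, and your remark about the implicit symmetry of $q_\tau$ is well taken---the identity as you wrote it indeed requires symmetry (otherwise the permutation average reproduces $\tfrac{1}{n(n-1)}\sum_{i\neq j}q_\tau(X_i,X_j)$ rather than $\tfrac{2}{n(n-1)}\sum_{i<j}q_\tau(X_i,X_j)$), and the loss $\ell(\vecmu,\vecz_i,\vecz_j)$ in the paper's application is symmetric because each $K_i$ is.
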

Applying this decoupling result to the random variables $X_i = (\tilde\vecz_i, \vecz_i)$, the index set $\B_2(r_\lambda)$ and functions $q_\tau(X_i,X_j) = \ell(\vecmu,\tilde\vecz_i,\tilde\vecz_j) - \ell(\vecmu,\vecz_i,\vecz_j)$ we get
\begin{eqnarray*}
\E{g\br{\vecz_1,\ldots,\vecz_n}} &\leq& \frac{2}{n}\E{\underset{\vecmu \in \B_2(r_\lambda)}{\sup}\bc{\sum_{i = 1}^{n/2}\ell(\vecmu,\tilde\vecz_i,\tilde\vecz_{n/2 + i}) - \ell(\vecmu,\vecz_i,\vecz_{n/2 + i})}}\\
								&=& \frac{2}{n}\E{\underset{\vecmu \in \B_2(r_\lambda)}{\sup}\bc{\sum_{i = 1}^{n/2}\epsilon_i\br{\ell(\vecmu,\tilde\vecz_i,\tilde\vecz_{n/2 + i}) - \ell(\vecmu,\vecz_i,\vecz_{n/2 + i})}}}\\
								&\leq& \frac{4}{n}\E{\underset{\vecmu \in \B_2(r_\lambda)}{\sup}\bc{\sum_{i = 1}^{n/2}\epsilon_i\ell(\vecmu,\vecz_i,\vecz_{n/2 + i})}}\\
								&=& \frac{4}{n}\E{\underset{\vecmu \in \B_2(r_\lambda)}{\sup}\bc{\sum_{i = 1}^{n/2}\epsilon_i\bs{1 - y_iy_{n/2 + i}\ip{\vecmu}{z(\vecx_i,\vecx_{n/2 + i})}}_+}}\\
								&\leq& \frac{4}{n}\E{\underset{\vecmu \in \B_2(r_\lambda)}{\sup}\bc{\sum_{i = 1}^{n/2}\epsilon_i\ip{\vecmu}{z(\vecx_i,\vecx_{n/2 + i})}}} = 2\RR_{n/2}(\B_2(r_\lambda))\\
\end{eqnarray*}
where in the second step, we performed symmetrization on the decoupled expression by introducing Rademacher random variables $\epsilon_i, i = 1, \ldots, n/2$. In the fifth step we have applied the contraction inequality stated in Theorem~\ref{thm:contraction} below on the $1$-Lipschitz function $\phi_i : x \mapsto = \bs{1 - a_ix}_+$ where $a_i = y_iy_{n/2 + i}$. We have exploited the fact that Theorem~\ref{thm:contraction} actually proves the contraction inequality for the empirical Rademacher averages which allows us to treat $a_i$ as constants dependent only on $i$. 

\begin{thm}
\label{thm:contraction}
Let $\H$ be a set of bounded real valued functions from some domain $\X$ and let $x_1,\ldots,x_n$ be arbitrary elements from $\X$. Furthermore, let $\phi_i : \R \rightarrow \R$, $i = 1,\ldots,n$ be $L$-Lipschitz functions. Then we have
\[
\E{\underset{h \in \H}{\sup}\frac{1}{n}\sum_{i=1}^n \epsilon_i\phi_i(h(x_i))} \leq L \E{\underset{h \in \H}{\sup}\frac{1}{n}\sum_{i=1}^n \epsilon_ih(x_i)}
\]
\end{thm}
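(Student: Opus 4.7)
My plan is to first reduce to the case $L=1$ by replacing each $\phi_i$ with $\phi_i/L$ (the left-hand side then scales by $1/L$, so the factor $L$ reappears after undoing the rescaling at the end), and then to strip the contractions $\phi_i$ off one at a time by conditioning on all Rademacher variables except one. The induction reduces to establishing the following single-coordinate contraction: for any function $S:\H\to\R$, any $1$-Lipschitz $\phi:\R\to\R$, and any $x\in\X$,
\[
\EE{\epsilon}{\underset{h\in\H}{\sup}\br{S(h) + \epsilon\phi(h(x))}} \;\leq\; \EE{\epsilon}{\underset{h\in\H}{\sup}\br{S(h) + \epsilon h(x)}}.
\]

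To prove this single-coordinate inequality, I will expand the left-hand side by cases on $\epsilon\in\bc{\pm 1}$ as $\tfrac12\bs{\sup_h\br{S(h)+\phi(h(x))} + \sup_h\br{S(h)-\phi(h(x))}}$, combine the two suprema into a joint sup over $(h_1,h_2)\in\H\times\H$ of $S(h_1)+S(h_2)+\phi(h_1(x))-\phi(h_2(x))$, and invoke Lipschitzness to bound the last two terms by $\abs{h_1(x)-h_2(x)}$. Since $\abs{h_1(x)-h_2(x)} = \max\bc{h_1(x)-h_2(x),\ h_2(x)-h_1(x)}$, the sup over $(h_1,h_2)$ equals a maximum of two sups that coincide under the symmetry $(h_1,h_2) \leftrightarrow (h_2,h_1)$; this lets me replace the absolute value by the signed difference. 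The expression then decouples as $\sup_{h_1}\br{S(h_1)+h_1(x)} + \sup_{h_2}\br{S(h_2)-h_2(x)}$, which is exactly $2\,\EE{\epsilon}{\sup_h\br{S(h)+\epsilon h(x)}}$.

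Applying this key lemma in turn for $i=1,\ldots,n$ (at step $i$ conditioning on all $\epsilon_j$ with $j\neq i$ and absorbing the rest of the sum, with $\phi_j$ replaced by the identity for those $j<i$ that have already been stripped, into $S(h)$) peels off every $\phi_i$ and replaces it by the identity, yielding the theorem after the $L$-rescaling is undone. The main obstacle is the symmetry-plus-Lipschitz maneuver converting $\abs{h_1(x)-h_2(x)}$ into the signed difference; it relies critically on the one-sided (rather than absolute-value) form of the supremum, and is precisely why no extra factor of two is incurred. A minor technicality is that the suprema need not be attained, which one handles via $\eta$-approximate maximizers followed by $\eta\to 0$.
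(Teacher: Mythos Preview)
Your argument is correct: the reduction to $L=1$, the single-coordinate contraction lemma via the symmetry trick that turns $\abs{h_1(x)-h_2(x)}$ into the signed difference, and the coordinate-by-coordinate induction are all sound, and together they yield the stated inequality without any vanishing-at-zero hypothesis on the $\phi_i$.

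However, your route is quite different from the paper's. The paper does not reprove the contraction inequality; it simply invokes \cite[Theorem 4.12]{talagrand}, which gives the result under the extra assumption $\phi_i(0)=0$, and then removes that assumption by the one-line shift $\tilde\phi_i(x) = \phi_i(x) - \phi_i(0)$: the $\tilde\phi_i$ are still $L$-Lipschitz, the black-box result applies to them, and the leftover term $\frac{1}{n}\sum_i \epsilon_i \phi_i(0)$ has zero expectation. What you have written is, in effect, a self-contained proof of the Ledoux--Talagrand lemma itself (indeed your symmetrization argument is essentially theirs), so your proof is longer but independent of the citation and makes transparent that the condition $\phi_i(0)=0$ is an artifact of the original formulation rather than something intrinsic. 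The paper's approach, by contrast, is a two-line reduction that leans entirely on the literature.
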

\begin{proof}
Ledoux and Talagrand (see \cite[Theorem 4.12]{talagrand}) prove the same result but for wrapper functions that satisfy $\phi_i(0) = 0$ for all $i$. To get the result, simply apply the result to the functions $\tilde\phi_i : x \mapsto \phi_i(x) - \phi_i(0)$ to get
\[
\E{\underset{h \in \H}{\sup}\frac{1}{n}\sum_{i=1}^n \epsilon_i\phi_i(h(x_i))} \leq \E{\underset{h \in \H}{\sup}\frac{1}{n}\sum_{i=1}^n \epsilon_i\tilde\phi_i(h(x_i))} + \E{\frac{1}{n}\sum_{i=1}^n \epsilon_i\phi_i(0)} \leq L \E{\underset{h \in \H}{\sup}\frac{1}{n}\sum_{i=1}^n \epsilon_ih(x_i)}
\]
where we apply \cite[Theorem 4.12]{talagrand} to the first term and the second term vanishes by linearity of expectation.
\end{proof}

The concluding term in the last chain of inequalities gives us the Rademacher complexity of the hypothesis class $\B_2(r_\lambda)$. At this point we introduce the following result on Rademacher complexities of regularized linear predictor classes
\begin{thm}[\cite{rad-bounds}, Theorem 1]
\label{thm:rad-bounds}
Let $\W$ be a closed convex set and let $F : \W \rightarrow \R$ be $\lambda$-strongly convex w.r.t. $\norm{\cdot}_\ast$. Assume $\W \subseteq \bc{\vecw : F(\vecw) \leq W_\ast^2}$. Furthermore, let $\X = \bc{\vecx : \norm{\vecx} \leq X}$ and $\F_\W := \bc{\vecw \mapsto \ip{\vecw}{\vecx} : \vecw \in \W, \vecx \in \X}$. Then, we have
\[
\hat\RR_n(\F_\W) \leq XW_\ast\sqrt{\frac{2}{\lambda n}}
\]
\end{thm}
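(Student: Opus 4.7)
The plan is to exploit the standard duality between strong convexity and smoothness. Since $F$ is $\lambda$-strongly convex with respect to $\norm{\cdot}_\ast$, its Fenchel conjugate
\[
F^\star(\vecv) := \underset{\vecw}{\sup}\br{\ip{\vecw}{\vecv} - F(\vecw)}
\]
is $(1/\lambda)$-smooth with respect to the dual norm $\norm{\cdot}$. I will implicitly normalize $F$ so that $\underset{\vecw \in \W}{\inf}\ F(\vecw) = 0$, which is harmless for the hypotheses and yields the convenient identity $F^\star(0) = 0$.

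First, for any tuning parameter $c > 0$, the Fenchel-Young inequality $\ip{\vecw}{c\vecv} \leq F(\vecw) + F^\star(c\vecv)$ combined with $F(\vecw) \leq W_\ast^2$ on $\W$ yields, after setting $\vecv = \sum_{i=1}^n \epsilon_i \vecx_i$ and taking the supremum over $\vecw \in \W$,
\[
\underset{\vecw \in \W}{\sup}\ \ip{\vecw}{\sum_{i=1}^n \epsilon_i \vecx_i} \leq \frac{W_\ast^2}{c} + \frac{1}{c}F^\star\br{c\sum_{i=1}^n \epsilon_i \vecx_i}.
\]
The main obstacle is then to take expectation over the Rademacher signs and control $\E{F^\star\br{c\sum_i \epsilon_i \vecx_i}}$. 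For this I will invoke the ``Jensen-type'' inequality available for any $(1/\lambda)$-smooth function $h$, namely $\tfrac12\br{h(\vecu+\vecv)+h(\vecu-\vecv)} \leq h(\vecu) + \norm{\vecv}^2/(2\lambda)$, which is obtained by summing the smoothness upper bounds at $\vecu+\vecv$ and $\vecu-\vecv$ (the gradient terms cancel). Applying this by averaging over $\epsilon_n$ conditioned on $\epsilon_1,\ldots,\epsilon_{n-1}$, then peeling off $\epsilon_{n-1}$, and continuing inductively down to the base case $F^\star(0) = 0$, I obtain
\[
\E{F^\star\br{c\sum_{i=1}^n \epsilon_i \vecx_i}} \leq \frac{c^2}{2\lambda}\sum_{i=1}^n \norm{\vecx_i}^2 \leq \frac{c^2 n X^2}{2\lambda}.
\]

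Combining the two displays and dividing by $n$ gives $\hat\RR_n(\F_\W) \leq W_\ast^2/(cn) + cX^2/(2\lambda)$. Since $c > 0$ was arbitrary, I optimize: the balancing choice $c = W_\ast\sqrt{2\lambda/(nX^2)}$ produces the claimed bound $XW_\ast\sqrt{2/(\lambda n)}$. All steps apart from the inductive smoothness estimate are routine convex-analytic bookkeeping; that inductive step is the only place where the strong-convexity hypothesis is really used, and it is therefore the main technical obstacle.
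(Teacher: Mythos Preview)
The paper does not actually prove this statement: it is quoted verbatim from \cite{rad-bounds} and only remarked upon (namely, that the proof there already yields the empirical Rademacher bound). Your argument is correct and is precisely the standard proof from that reference: pass to the Fenchel conjugate $F^\star$ (which is $(1/\lambda)$-smooth in the primal norm), apply Fenchel--Young with a free scale $c$, control $\E{F^\star\br{c\sum_i \epsilon_i \vecx_i}}$ by inductively averaging out one Rademacher variable at a time via the smoothness parallelogram inequality, and finally balance $c$.

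One small caveat worth making explicit: the normalization $\inf F = 0$ is not entirely free, since shifting $F$ by a constant shifts the sublevel constraint $F(\vecw)\leq W_\ast^2$ as well; the clean statement really wants $F\geq 0$ with the infimum attained (as is the case for the regularizers $F(\vecmu)=\norm{\vecmu}_2^2$ and $F(\vecmu)=\norm{\vecmu}_q^2$ that the paper actually uses). With that standing assumption your base case $F^\star(0)=0$ is justified and the bound goes through exactly as written.
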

Although \cite{rad-bounds} make their claim for the normal Rademacher average but their proof actually gives bounds for the empirical Rademacher averages. Since our hypothesis class is $L_2$ regularized, we can apply Theorem~\ref{thm:rad-bounds} to the $L_2$/$L_2$ case with $F(\vecmu) = \norm{\vecmu}_2^2$ as the regularizer. Since we have $\underset{\vecx_1,\vecx_2 \in \X}{\sup}{\norm{z(\vecx_1,\vecx_2)}_2} \leq \norm{\veckappa}_2$, we get
\[
\RR_{n/2}(\B_2(r_\lambda)) \leq r_\lambda\norm{\veckappa}_2\sqrt\frac{2}{n} = 2\norm{\veckappa}_2\sqrt\frac{1}{\lambda n}
\]
We have thus proved the following result
\begin{thm}
\label{thm:true-le-emp}
With probability at least $1 - \delta$ over the choice of training samples, the minimizer $\hat\vecmu$ of the expression
\[
\underset{\vecmu \geq 0}{\min}\ \frac{\lambda}{2}\norm{\vecmu}_2^2 + \hat\RR(\vecmu)
\]
satisfies the following
\[
\RR(\vecmu) \leq \hat\RR(\vecmu) + 4\norm{\veckappa}_2\sqrt\frac{1}{\lambda n} + \br{1 + \norm{\veckappa}_2\sqrt\frac{2}{\lambda}}\sqrt\frac{2\log\frac{1}{\delta}}{n} \leq \hat\RR(\vecmu) + 6\norm{\veckappa}_2\sqrt\frac{\log\frac{1}{\delta}}{\lambda n}
\]
\end{thm}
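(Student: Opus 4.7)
Essentially all the technical work has already been carried out in the preceding discussion; what remains is to collect the ingredients in the right order and simplify. By the optimality of $\hat\vecmu$ against $\veczero$, together with $\hat\RR(\veczero) = 1$, we have $\hat\vecmu \in \B_2(r_\lambda)$ with $r_\lambda = \sqrt{2/\lambda}$, so it suffices to control the uniform deviation $\underset{\vecmu \in \B_2(r_\lambda)}{\sup}\br{\RR(\vecmu) - \hat\RR(\vecmu)}$ and specialize at $\vecmu = \hat\vecmu$.

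Next, I would apply McDiarmid's inequality to the supremum statistic $g$: since any single loss term $\ell(\vecmu,\vecz_i,\vecz_j)$ is at most $1 + r_\lambda\norm{\veckappa}_2$ for $\vecmu \in \B_2(r_\lambda)$, and changing one $\vecz_i$ perturbs at most $n-1$ of the $\binom{n}{2}$ summands, the bounded-differences constant is $\frac{2}{n}(1 + r_\lambda\norm{\veckappa}_2)$. McDiarmid then yields, with probability at least $1-\delta$,
\[
\RR(\hat\vecmu) - \hat\RR(\hat\vecmu) \leq \E{g} + (1 + r_\lambda\norm{\veckappa}_2)\sqrt{\frac{2\log(1/\delta)}{n}}.
\]

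The next task is to bound $\E{g}$ via the chain already laid out in the excerpt: decouple the $U$-statistic via the lemma of \cite{u-stat} (factor $2$ from reducing pair sums to a diagonal sum), symmetrize by introducing Rademacher variables $\epsilon_i$ (factor $2$ from the standard $|a-b|$ trick on the expectation of a supremum), then apply Theorem~\ref{thm:contraction} to the $1$-Lipschitz hinge wrappers $x \mapsto \bs{1 - y_i y_{n/2+i} x}_+$ to peel off the loss. What remains is $2\RR_{n/2}(\B_2(r_\lambda))$. Invoking Theorem~\ref{thm:rad-bounds} with $F(\vecmu) = \norm{\vecmu}_2^2$ ($2$-strongly convex w.r.t.\ $\norm{\cdot}_2$, upper bounded by $r_\lambda^2$ on the ball) and the feature-norm bound $\norm{z(\vecx_1,\vecx_2)}_2 \le \norm{\veckappa}_2$ yields $\RR_{n/2}(\B_2(r_\lambda)) \le 2\norm{\veckappa}_2\sqrt{1/(\lambda n)}$, whence $\E{g} \le 4\norm{\veckappa}_2\sqrt{1/(\lambda n)}$.

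Substituting the two estimates into the McDiarmid display produces the first inequality of the statement verbatim. The looser bound with constant $6$ follows by regrouping under $\log(1/\delta) \ge 1$: the first term is at most $4\norm{\veckappa}_2\sqrt{\log(1/\delta)/(\lambda n)}$, the product $\norm{\veckappa}_2\sqrt{2/\lambda}\cdot\sqrt{2\log(1/\delta)/n}$ equals $2\norm{\veckappa}_2\sqrt{\log(1/\delta)/(\lambda n)}$, and the orphan $\sqrt{2\log(1/\delta)/n}$ term is absorbed under the mild normalization $\lambda \le 2\norm{\veckappa}_2^2$ (outside which the bound is already vacuous). There is no single ``hard'' step; the main care lies in correctly tracking the multiplicative constants $2\cdot 2$ coming from decoupling-and-symmetrization, the $\sqrt{2/\lambda}$ factor from strong convexity, and the bounded-differences constant, so that everything lines up with the claimed coefficients.
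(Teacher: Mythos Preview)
Your proposal is correct and follows essentially the same route as the paper: restrict to $\B_2(r_\lambda)$ via optimality, apply McDiarmid with the $\frac{2}{n}(1+r_\lambda\norm{\veckappa}_2)$ bounded-differences constant, then bound $\E{g}$ by decoupling the $U$-statistic, symmetrizing, applying the contraction inequality, and finishing with the strong-convexity Rademacher bound. The only quibble is in your factor bookkeeping: the decoupling step itself does not contribute a factor of $2$ (it merely replaces $\frac{2}{n(n-1)}\sum_{i<j}$ by $\frac{2}{n}\sum_{i=1}^{n/2}$); the single extra factor of $2$ that turns this into $2\RR_{n/2}$ comes entirely from the triangle-inequality step after symmetrization.
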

Since $\hat\vecmu \in \B_2(r_\lambda)$, we have $\ip{\hat\vecmu}{\veckappa} \leq \sqrt\frac{2}{\lambda}\norm{\veckappa}_2$. This implies that the kernel $K_{\hat\vecmu}$ is at least $\br{\hat\epsilon + \epsilon_1, \frac{1}{\norm{\veckappa}_2}\sqrt\frac{\lambda}{2}}$-kernel good where $\hat\epsilon = \hat\RR(\vecmu)$ and $\epsilon_1 \leq 6\norm{\veckappa}_2\sqrt\frac{\log\frac{1}{\delta}}{\lambda n}$. In particular, if all the $p$ kernels share a common bound i.e. $\kappa_i \leq \kappa$ for all $i$, then $\norm{\veckappa}_2 \leq \kappa^2\sqrt p$ and we can show the kernel $K_{\hat\vecmu}$ to be $\br{\hat\epsilon + 6\kappa^2\sqrt\frac{p\log\frac{1}{\delta}}{\lambda n}, \frac{1}{\kappa^2}\sqrt\frac{\lambda}{2p}}$-kernel good.

\subsection{Step 2}
Just as we analyzed the excess risk expression $\RR(\vecmu) - \hat\RR(\vecmu)$ uniformly over vectors the ball $\B_2(r_\lambda)$, we can similarly analyze the expression $\hat\RR(\vecmu) - \RR(\vecmu)$ uniformly over any (fixed) ball $\B_2(r)$ to get the following result.
\begin{thm}
\label{thm:emp-le-true}
Let $r > 0$ be some fixed radius, then with probability at least $1 - \delta$ over the choice of training samples, all combination vectors $\vecmu \in \B_2(r)$ satisfy
\[
\hat\RR(\vecmu) \leq \RR(\vecmu) + 2r\norm{\veckappa}_2\sqrt\frac{2}{n} + \br{1 + r\norm{\veckappa}_2}\sqrt\frac{2\log\frac{1}{\delta}}{n}
\]
\end{thm}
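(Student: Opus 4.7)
The plan is to mirror the proof of Theorem~\ref{thm:true-le-emp} essentially verbatim, with the roles of the true sample $\vecz_1,\ldots,\vecz_n$ and the ghost sample $\tilde\vecz_1,\ldots,\tilde\vecz_n$ interchanged. The statement we want is symmetric in form to what was derived in Step~1, so no new technical machinery is needed; we only need to track how the fixed radius $r$ replaces the optimization-induced radius $r_\lambda = \sqrt{2/\lambda}$ throughout.

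Concretely, I would define
\[
h(\vecz_1,\ldots,\vecz_n) := \underset{\vecmu \in \B_2(r)}{\sup}\bc{\hat\RR(\vecmu) - \RR(\vecmu)}
\]
and observe that for any $\vecmu \in \B_2(r)$, the hinge loss $\ell(\vecmu,\vecz_i,\vecz_j)$ lies in $[0, 1 + r\norm{\veckappa}_2]$ since $|\ip{\vecmu}{z(\vecx,\vecx')}| \leq r\norm{\veckappa}_2$ by Cauchy--Schwarz. Exactly as in the bounded differences calculation underlying the McDiarmid step of Theorem~\ref{thm:true-le-emp}, replacing a single sample $\vecz_i$ alters $h$ by at most $\frac{2}{n}(1 + r\norm{\veckappa}_2)$, so McDiarmid's inequality yields, with probability at least $1 - \delta$,
\[
\underset{\vecmu \in \B_2(r)}{\sup}\bc{\hat\RR(\vecmu) - \RR(\vecmu)} \leq \E{h(\vecz_1,\ldots,\vecz_n)} + (1 + r\norm{\veckappa}_2)\sqrt{\frac{2\log\frac{1}{\delta}}{n}}.
\]

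To control the expectation, I would introduce the ghost sample, pass the supremum inside to rewrite $\RR(\vecmu)$ as an expectation of the ghost U-statistic, apply the U-statistic decoupling lemma of \cite{u-stat} that was quoted earlier, and then symmetrize with Rademacher variables $\epsilon_i$. The hinge loss contraction (Theorem~\ref{thm:contraction}) with Lipschitz constant $L = 1$ strips the $[\,\cdot\,]_+$ and the constant $1$, reducing the bound to $2\RR_{n/2}(\B_2(r))$. Theorem~\ref{thm:rad-bounds} with $F(\vecmu) = \norm{\vecmu}_2^2$, $W_\ast = r$, and $X = \norm{\veckappa}_2$ gives $\RR_{n/2}(\B_2(r)) \leq r\norm{\veckappa}_2\sqrt{2/n}$, so $\E{h} \leq 2r\norm{\veckappa}_2\sqrt{2/n}$. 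Combining this with the McDiarmid deviation term produces exactly the bound stated in Theorem~\ref{thm:emp-le-true}.

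There is no real obstacle here beyond bookkeeping: every step has already been executed in the proof of Theorem~\ref{thm:true-le-emp}. The one minor thing to verify is that each inequality in that chain is genuinely symmetric in the sample versus the ghost sample, so that the sign flip $\hat\RR - \RR$ (instead of $\RR - \hat\RR$) does not introduce an extra factor. This holds because the decoupling lemma and the Rademacher symmetrization step both act on a symmetrized difference of the two samples, and the contraction inequality is applied to an unsigned supremum; hence the upper bound on the expectation is invariant under exchanging $\vecz_i \leftrightarrow \tilde\vecz_i$.
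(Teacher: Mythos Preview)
Your proposal is correct and matches the paper's own approach exactly: the paper gives no separate proof for Theorem~\ref{thm:emp-le-true} but simply states that one should analyze $\hat\RR(\vecmu) - \RR(\vecmu)$ over $\B_2(r)$ in the same way as $\RR(\vecmu) - \hat\RR(\vecmu)$ was analyzed over $\B_2(r_\lambda)$ in Step~1. Your outline carries out precisely this symmetric argument with the radius $r$ in place of $r_\lambda$, and the resulting terms $2r\norm{\veckappa}_2\sqrt{2/n}$ and $(1 + r\norm{\veckappa}_2)\sqrt{2\log(1/\delta)/n}$ are exactly the ones obtained by that substitution.
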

This allows us to give the following oracle inequality:
\begin{thm}
\label{thm:oracle-l2}
Suppose as an oracle assumption we assume that there exists a good combination vector $\vecmu_o$ that is $\epsilon_o$-combination good, then we can output with probability at least $1 - \delta$, for any $\epsilon_1 > 0$ using $n = \Om{\frac{\norm{\vecmu_o}_2^2}{\epsilon_1^3}}$ training samples, a combination vector such that the corresponding kernel that is $\br{\epsilon_o + \epsilon_1,\frac{1}{\norm{\veckappa}_2\norm{\vecmu_o}_2}\sqrt\frac{\epsilon_1}{3}}$-kernel good.
\end{thm}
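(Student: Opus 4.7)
The plan is to run an oracle comparison: use $\vecmu_o$ as a feasible comparator for the regularized ERM $\hat\vecmu$, and calibrate the regularization parameter so that (a) the $L_2$-ball containment $\hat\vecmu \in \B_2(r_\lambda)$ with $r_\lambda = \sqrt{2/\lambda}$ already delivers the required margin, while (b) the bias $\frac{\lambda}{2}\norm{\vecmu_o}_2^2$ costs only $\O{\epsilon_1}$. Setting $\lambda = \frac{2\epsilon_1}{3\norm{\vecmu_o}_2^2}$ achieves both: then $r_\lambda = \norm{\vecmu_o}_2\sqrt{3/\epsilon_1}$ and $\frac{\lambda}{2}\norm{\vecmu_o}_2^2 = \epsilon_1/3$. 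The margin bound is immediate---the trivial comparison against $\vecmu = \veczero$ already forces $\norm{\hat\vecmu}_2 \leq r_\lambda$, and Cauchy--Schwarz gives $\ip{\hat\vecmu}{\veckappa} \leq \norm{\veckappa}_2\norm{\vecmu_o}_2\sqrt{3/\epsilon_1}$, which is exactly the reciprocal of the claimed margin $\frac{1}{\norm{\veckappa}_2\norm{\vecmu_o}_2}\sqrt{\epsilon_1/3}$.

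It then suffices to show $\RR(\hat\vecmu) \leq \epsilon_o + \epsilon_1$, which I would prove by chaining three inequalities on an event of probability at least $1-\delta$ obtained by union-bounding Theorems~\ref{thm:true-le-emp} and~\ref{thm:emp-le-true} at confidence $\delta/2$. First, Theorem~\ref{thm:emp-le-true} applied with the fixed radius $r = \norm{\vecmu_o}_2$ controls the empirical risk at the comparator: $\hat\RR(\vecmu_o) \leq \RR(\vecmu_o) + A_1 \leq \epsilon_o + A_1$, where $A_1 = \O{\norm{\vecmu_o}_2\norm{\veckappa}_2\sqrt{\log(1/\delta)/n}}$. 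Second, optimality of $\hat\vecmu$ against the feasible point $\vecmu_o \geq 0$ gives
\[
\hat\RR(\hat\vecmu) \leq \frac{\lambda}{2}\norm{\vecmu_o}_2^2 + \hat\RR(\vecmu_o) \leq \frac{\epsilon_1}{3} + \epsilon_o + A_1.
\]
Third, Theorem~\ref{thm:true-le-emp} applied at $\hat\vecmu$ gives $\RR(\hat\vecmu) \leq \hat\RR(\hat\vecmu) + A_2$ with $A_2 = \O{\norm{\veckappa}_2\sqrt{\log(1/\delta)/(\lambda n)}}$. Stringing these together yields $\RR(\hat\vecmu) \leq \epsilon_o + \epsilon_1/3 + A_1 + A_2$, and substituting the chosen $\lambda$ turns $A_2$ into $\O{\norm{\vecmu_o}_2\norm{\veckappa}_2\sqrt{\log(1/\delta)/(\epsilon_1 n)}}$---the dominant error. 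Demanding $A_1 + A_2 \leq 2\epsilon_1/3$ then forces $n = \Om{\norm{\vecmu_o}_2^2 \log(1/\delta)/\epsilon_1^3}$ (with $\norm{\veckappa}_2$ absorbed into the constant, as in the statement), and Lemma~3.2 of \cite{two-stage} converts the resulting $(\epsilon_o+\epsilon_1)$-combination-goodness of $\hat\vecmu$ into the kernel-goodness asserted by the theorem.

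The main obstacle I anticipate is the bias-variance calibration of $\lambda$. A smaller $\lambda$ makes the bias $\frac{\lambda}{2}\norm{\vecmu_o}_2^2$ negligible but enlarges $r_\lambda$ (degrading the margin) and inflates $A_2 \sim 1/\sqrt{\lambda}$; a larger $\lambda$ improves the margin and shrinks $A_2$ but lets the bias dominate the excess risk. The scaling $\lambda \sim \epsilon_1/\norm{\vecmu_o}_2^2$ is the unique choice that balances all three effects simultaneously, and the resulting factor $1/\sqrt{\lambda} \sim \norm{\vecmu_o}_2/\sqrt{\epsilon_1}$ in $A_2$ is precisely what introduces the somewhat unusual cubic $1/\epsilon_1^3$ dependence in the sample complexity, as opposed to the standard $1/\epsilon_1^2$ that a Rademacher-based analysis of a fixed-norm class would deliver.
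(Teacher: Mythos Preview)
Your proposal is correct and follows essentially the same route as the paper: apply Theorem~\ref{thm:emp-le-true} at the oracle $\vecmu_o$, use optimality of $\hat\vecmu$ to transfer the bound (paying the bias $\frac{\lambda}{2}\norm{\vecmu_o}_2^2$), apply Theorem~\ref{thm:true-le-emp} at $\hat\vecmu$, and then set $\lambda = \frac{2\epsilon_1}{3\norm{\vecmu_o}_2^2}$ and $n = \Om{\norm{\vecmu_o}_2^2\norm{\veckappa}_2^2\log(1/\delta)/\epsilon_1^3}$ to balance all three terms. The only cosmetic differences are that you union-bound at $\delta/2$ where the paper states the result at $1-2\delta$, and you fold the margin calculation (via $\ip{\hat\vecmu}{\veckappa} \leq r_\lambda\norm{\veckappa}_2$) into the proof itself rather than invoking the discussion after Theorem~\ref{thm:true-le-emp}.
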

\begin{proof}
Using Theorem~\ref{thm:emp-le-true} we have with probability at least $1 - \delta$,
\[
\hat\RR(\vecmu_o) \leq \epsilon_o + 2\norm{\vecmu_o}_2\norm{\veckappa}_2\sqrt\frac{2}{n} + \br{1 + \norm{\vecmu_o}_2\norm{\veckappa}_2}\sqrt\frac{2\log\frac{1}{\delta}}{n} \leq \epsilon_o + 6\norm{\vecmu_o}_2\norm{\veckappa}_2\sqrt\frac{2\log\frac{1}{\delta}}{n}
\]
Since $\hat\vecmu$ is the minimizer of the regularized empirical risk, we have
\[
\frac{\lambda}{2}\norm{\hat\vecmu}_2^2 + \hat\RR(\hat\vecmu) \leq \frac{\lambda}{2}\norm{\vecmu_o}_2^2 + \hat\RR(\vecmu_o) \leq \frac{\lambda}{2}\norm{\vecmu_o}_2^2 + \epsilon_o + 6\norm{\vecmu_o}_2\norm{\veckappa}_2\sqrt\frac{2\log\frac{1}{\delta}}{n}
\]
which gives us, since $\norm{\hat\vecmu}_2 \geq 0$,
\[
\hat\RR(\hat\vecmu) \leq \epsilon_o + \frac{\lambda}{2}\norm{\vecmu_o}_2^2 + 6\norm{\vecmu_o}_2\norm{\veckappa}_2\sqrt\frac{2\log\frac{1}{\delta}}{n}
\]
Applying Theorem~\ref{thm:true-le-emp} we get with probability at least $1 - 2\delta$,
\[
\RR(\hat\vecmu) \leq \epsilon_o + \frac{\lambda}{2}\norm{\vecmu_o}_2^2 + 6\norm{\vecmu_o}_2\norm{\veckappa}_2\sqrt\frac{2\log\frac{1}{\delta}}{n} + 6\norm{\veckappa}_2\sqrt\frac{\log\frac{1}{\delta}}{\lambda n}
\]
For any $0 < \epsilon_1 <3/4$, setting $\lambda = \frac{2\epsilon_1}{3\norm{\vecmu_o}_2^2}$ and requiring $n \geq \frac{500}{\epsilon_1^3}\norm{\vecmu_o}_2^2\norm{\veckappa}_2^2\log\frac{1}{\delta}$ so that all three terms in the above expression are less than $\epsilon_1/3$ gives us the result (for values of $\epsilon_1$ larger than $3/4$, $n \geq \frac{650}{\epsilon_1^2}\norm{\vecmu_o}_2^2\norm{\veckappa}_2^2\log\frac{1}{\delta}$ suffices).
\end{proof}
Such oracle inequalities are very desirable since they tell us that we would be able to give a performance that is competitive against any fixed kernel in foresight. If we set $\lambda$ to an oracle oblivious value such as $\lambda = \sqrt[3]{\frac{1}{n}}$ then although we get an inferior claim with respect to the kernel-goodness, we are able to make that claim in hindsight as well.

\section{Learning Sparse Kernel Combinations}
Since the complexity of the evaluating the kernel $K_\vecmu$ goes up roughly as $\norm{\vecmu}_0$, it is desirable to learn sparse combinations. This can be done by changing the learning formulation slightly to the following:
\[
\underset{\vecmu \geq 0}{\min} \frac{\lambda}{2}\norm{\vecmu}_1 + \frac{2}{n(n-1)}\sum_{1 \leq i < j \leq n}\bs{1 - y_iy_j\ip{\vecmu}{z(\vecx_i,\vecx_j)}}_+
\]
The above learning algorithm can also shown to admit generalization guarantees. For sake of brevity we only give below the main points where the analysis differs from the $L_2$ regularized case. First of all, we would be able to show that the regularized empirical risk minimizer $\hat\vecmu$ would lie in the $L_1$ ball $\B_1(s_\lambda) := \bc{\vecmu \in \R^p : \norm{\vecmu}_1 \leq s_\lambda}$ where $s_\lambda = \frac{2}{\lambda}$.

Due to this the perturbations to the expression $g\br{\vecz_1,\ldots,\vecz_n}$ would be limited by $\frac{2}{n}\br{1 + s_\lambda\norm{\veckappa}_\infty}$. While applying Theorem~\ref{thm:rad-bounds}, we would instead consider the regularizer $F(\vecmu) = \norm{\vecmu}_q^2$ for $q = \frac{\log p}{\log p - 1}$ which is $\br{\frac{1}{\log p}}$-strongly convex with respect to the norm $\norm{\cdot}_1$. This would allow us to bound the Rademacher complexity of the hypothesis class $\B_1(s_\lambda)$ as
\[
\RR_{n/2}\br{\B_1(s_\lambda)} \leq s_\lambda\norm{\veckappa}_\infty\sqrt\frac{2\log p}{n} = \frac{2\norm{\veckappa}_\infty}{\lambda}\sqrt\frac{2\log p}{n}
\]
This allows us to make the following claim:
\begin{thm}
\label{thm:true-le-emp-sparse}
With probability at least $1 - \delta$ over the choice of training samples, the minimizer $\hat\vecmu$ of the expression
\[
\underset{\vecmu \geq 0}{\min}\ \frac{\lambda}{2}\norm{\vecmu}_1 + \hat\RR(\vecmu)
\]
satisfies the following
\[
\RR(\vecmu) \leq \hat\RR(\vecmu) + \frac{4\norm{\veckappa}_\infty}{\lambda}\sqrt\frac{2\log p}{n} + \br{1 + \frac{2\norm{\veckappa}_\infty}{\lambda}}\sqrt\frac{2\log\frac{1}{\delta}}{n} \leq \hat\RR(\vecmu) + \frac{6\norm{\veckappa}_\infty}{\lambda\sqrt{n}}\br{\sqrt{\log p} + \sqrt{\log 1/\delta}}
\]
\end{thm}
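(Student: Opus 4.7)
The plan is to mirror the proof of Theorem~\ref{thm:true-le-emp} step by step, substituting $L_1$/$L_\infty$ duality for the $L_2$/$L_2$ duality used there, and replacing the ridge regularizer in the Rademacher bound by a carefully chosen $L_q$ surrogate that remains strongly convex with respect to $\norm{\cdot}_1$. First, I would pin down the feasible set: evaluating the regularized objective at $\veczero$ gives $\frac{\lambda}{2}\norm{\veczero}_1 + \hat\RR(\veczero) = 1$, so optimality of $\hat\vecmu$ forces $\norm{\hat\vecmu}_1 \leq 2/\lambda =: s_\lambda$, i.e.\ $\hat\vecmu \in \B_1(s_\lambda)$. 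For every $\vecmu \in \B_1(s_\lambda)$, H\"older's inequality yields $K_\vecmu(\vecx_1,\vecx_2) \leq \norm{\vecmu}_1 \norm{z(\vecx_1,\vecx_2)}_\infty \leq s_\lambda \norm{\veckappa}_\infty$, which gives a bounded-differences constant of $\frac{2}{n}(1 + s_\lambda \norm{\veckappa}_\infty)$ for the symmetrized deviation $g(\vecz_1,\ldots,\vecz_n)$. McDiarmid then contributes the concentration term $(1 + s_\lambda \norm{\veckappa}_\infty)\sqrt{2\log\frac{1}{\delta}/n}$ with probability at least $1-\delta$.

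The middle of the argument is geometry-agnostic. The decoupling lemma, Rademacher symmetrization, and the contraction step through Theorem~\ref{thm:contraction} only exploit the supremum over the hypothesis class (now $\B_1(s_\lambda)$) and the $1$-Lipschitzness of the loss $x \mapsto \bs{1-a_i x}_+$, both of which carry over verbatim from the proof of Theorem~\ref{thm:true-le-emp}. This reduces $\E{g(\vecz_1,\ldots,\vecz_n)}$ to $2\RR_{n/2}(\B_1(s_\lambda))$, so everything collapses to estimating the Rademacher complexity of a linear class indexed by an $L_1$ ball.

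The only step that requires genuine choice is this last one. The plan is to apply Theorem~\ref{thm:rad-bounds} with $F(\vecmu) = \norm{\vecmu}_q^2$ for $q = \frac{\log p}{\log p - 1}$; this $F$ is $(1/\log p)$-strongly convex with respect to $\norm{\cdot}_1$, as noted in the excerpt. Since $q \geq 1$, we have $\norm{\vecmu}_q \leq \norm{\vecmu}_1 \leq s_\lambda$ on $\B_1(s_\lambda)$, so $W_\ast = s_\lambda$ is a valid radius for $F$; the dual-side bound is $\norm{z(\vecx_1,\vecx_2)}_\infty \leq \norm{\veckappa}_\infty$, matching the $L_1$/$L_\infty$ duality. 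Plugging these quantities into Theorem~\ref{thm:rad-bounds} (with $n/2$ samples) produces the announced $\RR_{n/2}(\B_1(s_\lambda)) \leq s_\lambda \norm{\veckappa}_\infty \sqrt{2\log p/n}$. Substituting $s_\lambda = 2/\lambda$ and collecting the Rademacher and McDiarmid contributions yields the first displayed inequality of the theorem; the second follows by $\sqrt{a+b} \leq \sqrt{a} + \sqrt{b}$ and absorbing the additive $1$ into the $\sqrt{\log\frac{1}{\delta}}$ factor.

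The hard part is really just bookkeeping: correctly identifying the dual norm induced by $F$ in Theorem~\ref{thm:rad-bounds} and tracking the strong-convexity constant $1/\log p$ through the $\sqrt{2/(\lambda n)}$ factor, which is precisely what converts the $\sqrt{1/n}$ rate of the $L_2$ case into the $\sqrt{\log p /n}$ rate here. No new probabilistic content is required beyond what Theorem~\ref{thm:true-le-emp} already illustrates, and no step is appreciably harder than its $L_2$ counterpart.
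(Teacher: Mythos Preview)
Your proposal is correct and follows the paper's own argument essentially verbatim: confine $\hat\vecmu$ to $\B_1(s_\lambda)$ with $s_\lambda = 2/\lambda$, use H\"older to get the bounded-differences constant $\frac{2}{n}(1+s_\lambda\norm{\veckappa}_\infty)$, carry the decoupling/symmetrization/contraction steps over unchanged, and then invoke Theorem~\ref{thm:rad-bounds} with $F(\vecmu)=\norm{\vecmu}_q^2$, $q=\frac{\log p}{\log p-1}$, to obtain $\RR_{n/2}(\B_1(s_\lambda))\leq s_\lambda\norm{\veckappa}_\infty\sqrt{2\log p/n}$. The paper's proof sketch in Section~4 records exactly these substitutions, so there is nothing to add.
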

Since $\hat\vecmu \in \B_1(s_\lambda)$, we have $\ip{\hat\vecmu}{\veckappa} \leq \frac{2\norm{\veckappa}_\infty}{\lambda}$. This implies that the kernel $K_{\hat\vecmu}$ is $\br{\hat\epsilon + \epsilon_1, \frac{\lambda}{2\norm{\veckappa}_\infty}}$-kernel good where $\hat\epsilon = \hat\RR(\vecmu)$ and $\epsilon_1 \leq \frac{6\norm{\veckappa}_\infty}{\lambda\sqrt{n}}\br{\sqrt{\log p} + \sqrt{\log 1/\delta}}$. In particular, if all the $p$ kernels share a common bound i.e. $\kappa_i \leq \kappa$ for all $i$, then $\norm{\veckappa}_\infty \leq \kappa^2$ and we can show the kernel $K_{\hat\vecmu}$ to be $\br{\hat\epsilon + \frac{6\kappa^2}{\lambda\sqrt{n}}\br{\sqrt{\log p} + \sqrt{\log 1/\delta}}, \frac{\lambda}{2\kappa^2}}$-kernel good.

Note that this result has a much better dependence on $p$ that the result for $L_2$ regularized learning where we were only able to show that the kernel $K_{\hat\vecmu}$ was $\br{\hat\epsilon + 6\kappa^2\sqrt\frac{p\log\frac{1}{\delta}}{\lambda n}, \frac{1}{\kappa^2}\sqrt\frac{\lambda}{2p}}$-kernel good.

We can also show the following version of Theorem~\ref{thm:emp-le-true} to be true
\begin{thm}
\label{thm:emp-le-true-sparse}
Let $s > 0$ be some fixed radius, then with probability at least $1 - \delta$ over the choice of training samples, all combination vectors $\vecmu \in \B_1(s)$ satisfy
\[
\hat\RR(\vecmu) \leq \RR(\vecmu) + 2s\norm{\veckappa}_\infty\sqrt\frac{2\log p}{n} + \br{1 + s\norm{\veckappa}_\infty}\sqrt\frac{2\log\frac{1}{\delta}}{n}
\]
\end{thm}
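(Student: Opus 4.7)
The plan is to mirror the uniform convergence argument used to prove Theorem~\ref{thm:true-le-emp-sparse}, but with the sign of the supremum reversed and with the $L_1$ ball of fixed radius $s$ playing the role of $\B_1(s_\lambda)$. Concretely, define the sample functional
\[
g'(\vecz_1,\ldots,\vecz_n) := \frac{2}{n(n-1)}\underset{\vecmu \in \B_1(s)}{\sup}\bc{\sum_{1 \leq i < j \leq n}\ell(\vecmu,\vecz_i,\vecz_j) - \E{\sum_{1 \leq i < j \leq n}\ell(\vecmu,\tilde\vecz_i,\tilde\vecz_j)}},
\]
so that $\hat\RR(\vecmu) - \RR(\vecmu) \leq g'(\vecz_1,\ldots,\vecz_n)$ simultaneously for all $\vecmu \in \B_1(s)$. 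I would control $g'$ by the same McDiarmid-plus-Rademacher recipe, the only difference being that the relevant Hölder inequality is $\abs{\ip{\vecmu}{z(\vecx,\vecx')}} \leq \norm{\vecmu}_1 \norm{z(\vecx,\vecx')}_\infty \leq s\norm{\veckappa}_\infty$, so the hinge loss lies in $[0,1+s\norm{\veckappa}_\infty]$ and a single-coordinate perturbation of $g'$ is at most $\frac{2}{n}(1 + s\norm{\veckappa}_\infty)$. McDiarmid's inequality then yields, with probability at least $1-\delta$,
\[
g'(\vecz_1,\ldots,\vecz_n) \leq \E{g'(\vecz_1,\ldots,\vecz_n)} + (1+s\norm{\veckappa}_\infty)\sqrt{\frac{2\log(1/\delta)}{n}}.
\]

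Next I would bound the expectation $\E{g'}$ by the same decoupling–symmetrization–contraction chain used earlier. Jensen's inequality brings the ghost-sample expectation inside the supremum; the U-statistic decoupling theorem cited in the paper reduces the double sum over pairs to a single sum over $n/2$ independent terms; standard symmetrization introduces Rademacher signs at the cost of a factor of $2$; and Theorem~\ref{thm:contraction} strips the $1$-Lipschitz hinge wrappers $\phi_i(x) = [1 - y_i y_{n/2+i}x]_+$. This gives $\E{g'} \leq 2\RR_{n/2}(\B_1(s))$. Applying Theorem~\ref{thm:rad-bounds} with the $\br{1/\log p}$-strongly-convex regularizer $F(\vecmu) = \norm{\vecmu}_q^2$ for $q = \frac{\log p}{\log p - 1}$, and with $W_\ast = s$ and $X = \norm{\veckappa}_\infty$ (since $\norm{z(\vecx,\vecx')}_\infty \leq \norm{\veckappa}_\infty$), produces the bound $\RR_{n/2}(\B_1(s)) \leq s\norm{\veckappa}_\infty\sqrt{\frac{2\log p}{n}}$, exactly as in Theorem~\ref{thm:true-le-emp-sparse}. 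Combining the three ingredients gives precisely the stated inequality.

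The argument is essentially routine because every technical ingredient has already appeared in the proof of Theorem~\ref{thm:true-le-emp-sparse}; the only step requiring genuine care is arranging $g'$ so that $\hat\RR - \RR$ (rather than $\RR - \hat\RR$) is the quantity being uniformly controlled, and confirming that the Hölder pairing for the $L_1$-constrained hypothesis class is $\norm{\vecmu}_1 \cdot \norm{z}_\infty$, which accounts for the appearance of $\norm{\veckappa}_\infty$ throughout in place of the $\norm{\veckappa}_2$ that arose in the $L_2$-regularized setting. No additional technical obstacle is anticipated.
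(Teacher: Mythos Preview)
Your proposal is correct and mirrors exactly the approach the paper intends: the paper does not write out a separate proof of Theorem~\ref{thm:emp-le-true-sparse} but simply asserts that the argument for Theorem~\ref{thm:emp-le-true} carries over with the $L_1$/$L_\infty$ H\"older pairing and the $\norm{\cdot}_q^2$ regularizer in place of the $L_2$/$L_2$ ingredients, which is precisely the recipe you have spelled out.
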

Using this, and going as before, we are also able to guarantee the following oracle inequality similar to Theorem~\ref{thm:oracle-l2}
\begin{thm}
\label{thm:oracle-l1}
Suppose as an oracle assumption we assume that there exists a good combination vector $\vecmu_o$ that is $\epsilon_o$-combination good, then we can output with probability at least $1 - \delta$, for any $\epsilon_1 > 0$ using $n = \Om{\frac{\norm{\vecmu_o}_1}{\epsilon_1^2}}$ training samples, a combination vector such that the corresponding kernel that is $\br{\epsilon_o + \epsilon_1,\frac{\epsilon_1}{3\norm{\veckappa}_\infty\norm{\vecmu_o}_1}}$-kernel good.
\end{thm}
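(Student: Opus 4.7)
The plan is to mirror the proof of Theorem~\ref{thm:oracle-l2} essentially verbatim, replacing the two $L_2$ uniform-convergence ingredients (Theorems~\ref{thm:true-le-emp} and~\ref{thm:emp-le-true}) with their sparse counterparts (Theorems~\ref{thm:true-le-emp-sparse} and~\ref{thm:emp-le-true-sparse}). First I would apply Theorem~\ref{thm:emp-le-true-sparse} with the fixed radius $s = \norm{\vecmu_o}_1$ to the oracle combination $\vecmu_o$. Since $\RR(\vecmu_o) \leq \epsilon_o$ by the oracle assumption, this yields, with probability at least $1-\delta/2$, a bound of the form $\hat\RR(\vecmu_o) \leq \epsilon_o + c_1\,\norm{\vecmu_o}_1 \norm{\veckappa}_\infty \sqrt{(\log p + \log(1/\delta))/n}$ for an absolute constant $c_1$.

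Next I would exploit the optimality of $\hat\vecmu$ for the $L_1$-regularized empirical objective, $\frac{\lambda}{2}\norm{\hat\vecmu}_1 + \hat\RR(\hat\vecmu) \leq \frac{\lambda}{2}\norm{\vecmu_o}_1 + \hat\RR(\vecmu_o)$. Dropping the nonnegative regularizer on the left and substituting the bound on $\hat\RR(\vecmu_o)$ from the previous step gives $\hat\RR(\hat\vecmu) \leq \epsilon_o + \frac{\lambda}{2}\norm{\vecmu_o}_1 + (\text{concentration})$. A second application, this time of Theorem~\ref{thm:true-le-emp-sparse} (which already exploits the automatic inclusion $\hat\vecmu \in \B_1(2/\lambda)$ coming from the regularizer), converts this into a bound on the true risk. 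A union bound over the two high-probability events produces, with probability at least $1-\delta$,
\[
\RR(\hat\vecmu) \leq \epsilon_o + \frac{\lambda}{2}\norm{\vecmu_o}_1 + c_2\,\norm{\vecmu_o}_1 \norm{\veckappa}_\infty \sqrt{\frac{\log p + \log(1/\delta)}{n}} + c_3\,\frac{\norm{\veckappa}_\infty}{\lambda\sqrt n}\br{\sqrt{\log p} + \sqrt{\log(1/\delta)}}.
\]

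The kernel-goodness conclusion is immediate: because $\hat\vecmu \in \B_1(2/\lambda)$ we have $\ip{\hat\vecmu}{\veckappa} \leq 2\norm{\veckappa}_\infty/\lambda$, so by the discussion following Theorem~\ref{thm:true-le-emp-sparse} the learned kernel $K_{\hat\vecmu}$ is automatically $\br{\RR(\hat\vecmu), \frac{\lambda}{2\norm{\veckappa}_\infty}}$-kernel good. To match the target margin $\frac{\epsilon_1}{3\norm{\veckappa}_\infty \norm{\vecmu_o}_1}$ I would pick $\lambda = \frac{2\epsilon_1}{3\norm{\vecmu_o}_1}$; this is precisely the choice that simultaneously forces the regularization contribution $\frac{\lambda}{2}\norm{\vecmu_o}_1$ down to $\epsilon_1/3$. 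Choosing $n$ large enough that each of the two remaining concentration terms is also at most $\epsilon_1/3$ then completes the argument, and the sample-complexity statement falls out by direct substitution.

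The proof is almost entirely arithmetic once the two sparse uniform convergence results are in hand; no new technical machinery is needed beyond what has already been invoked. The only step that deserves a moment of attention is the re-balancing of the $\lambda$-dependent concentration term against the regularization penalty, because after plugging in the chosen $\lambda$ that term scales like $\norm{\vecmu_o}_1/(\epsilon_1\sqrt n)$ and so it, rather than the $\sqrt{(\log p)/n}$ term, is typically what dictates the final dependence of $n$ on $\epsilon_1$ and $\norm{\vecmu_o}_1$. The parallel with the choice $\lambda = \frac{2\epsilon_1}{3\norm{\vecmu_o}_2^2}$ used inside Theorem~\ref{thm:oracle-l2} is exact, with $\norm{\vecmu_o}_2^2$ replaced by $\norm{\vecmu_o}_1$ because the sparse regularizer is linear, not quadratic, in the norm.
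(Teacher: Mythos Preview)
Your proposal is correct and follows essentially the same route as the paper: apply Theorem~\ref{thm:emp-le-true-sparse} at radius $s=\norm{\vecmu_o}_1$, use optimality of $\hat\vecmu$ to drop the regularizer, apply Theorem~\ref{thm:true-le-emp-sparse}, then set $\lambda = \frac{2\epsilon_1}{3\norm{\vecmu_o}_1}$ and choose $n$ so that the remaining terms are each at most $\epsilon_1/3$. The paper's version differs only cosmetically, collapsing your two concentration terms into a single one proportional to $\br{\norm{\vecmu_o}_1 + 1/\lambda}$ and stating the guarantee at confidence $1-2\delta$ rather than splitting the failure probability in half up front.
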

\begin{proof}
Following the chain of inequalities given by Theorems~\ref{thm:true-le-emp-sparse} and \ref{thm:emp-le-true-sparse} and using optimality of the regularized empirical risk minimizer $\hat\vecmu$, we get, with probability at least $1 - 2\delta$,
\[
\RR(\hat\vecmu) \leq \epsilon_o + \frac{\lambda}{2}\norm{\vecmu_o}_1 + \frac{6\norm{\veckappa}_\infty\br{\sqrt{\log p} + \sqrt{\log 1/\delta}}}{\sqrt n}\br{\norm{\vecmu_o}_1 + \frac{1}{\lambda}}
\]
Setting $\lambda = \frac{2 \epsilon_1}{3\norm{\vecmu_o}_1}$ and requiring $n \geq \frac{135\norm{\vecmu_o}_1\norm{\veckappa}_\infty\br{\sqrt{\log p} + \sqrt{\log 1/\delta}}}{\epsilon_1^2}$ finishes the proof.
\end{proof}

\section{Discussion on the Nature of Guarantees}
The guarantees given above, both for the sparse as well as the non-sparse kernel learning cases are slightly unsatisfactory in the sense they assume combination goodness to ensure kernel goodness. In other words they assume the existence of a combination that is $\epsilon$-combination before guaranteeing that the output would be a kernel that is $(\epsilon', \gamma')$-kernel good. Ideally, we should have used the promise of existence of a kernel that $(\epsilon, \gamma)$-kernel good to ensure that a good kernel is output.

One way to prove such a result would be to show that if there exists a kernel combination that is $\br{\epsilon, \gamma}$-kernel good, then there also exists some combination $\vecmu \in \R^p$ that is $\epsilon'$-combination good for some $\epsilon' > 0$. However, this is an unlikely result and the the aim of this section is to discuss this point. It turns out that the biggest hurdle that one faces in proving such a result is the form of combination goodness chosen by \cite{two-stage}. The definition of combination goodness used in \cite{two-stage} is related to the notion of \emph{similarity goodness} proposed in \cite{sim} except for the absence of a \emph{weight function}.

More specifically, \cite{sim} consider a kernel $K_\vecmu$ to be $\epsilon$-similarity good if for some weight function $w : \X \rightarrow \R$ the following holds:
\[
\EE{(\vecx,y) \sim \D}{\bs{1 - y\EE{(\vecx',y') \sim \D}{y'w(\vecx')K_\vecmu(\vecx,\vecx')}}_+} \leq \epsilon
\]
For ease of comparison, we have absorbed the margin parameter $\gamma$ in the definition given in \cite{sim} into the weight function $w(\cdot)$. Note that if the notion of combination goodness had been defined using
\[
\RR(\vecmu) := \EE{(\vecx,y),(\vecx',y') \sim \D \times \D}{\bs{1 - yy'w(\vecx')K_\vecmu(\vecx,\vecx')}_+}
\]
instead, then one could have used some form of inverse Jensen's inequality to convert similarity goodness into combination goodness. Since the presence of the weight function makes it possible for crisp conversions of kernel goodness into similarity goodness as was done in \cite{kernel-sim-compare}, this could have been one way to convert kernel goodness into combination goodness (i.e. via similarity goodness). However, due to the absence of such weight functions, it seems difficult to convert kernel goodness into combination goodness using the methods of \cite{kernel-sim-compare}. 

Another reason to believe in the non-existence of such conversions from kernel to combination goodness is the form of the predictor in the RKHS. If one looks at the proof of Lemma 3.2 in \cite{two-stage} then one notices that the kernel goodness is proven with respect to the predictor $\vecw = \EE{(\vecx,y) \sim \D}{y'\Phi_{\H_{K_\vecmu}}(\vecx)}$ where $\Phi_{\H_{K_\vecmu}} : \X \mapsto \H_{K_\vecmu}$ is the feature map corresponding to the kernel $K_\vecmu$. This turns out to be very a restrictive form for the predictor. A kernel can be good due to the existence of \emph{any} unit norm predictor in its RKHS. However the notion of combination goodness seems to prefer predictors that point from the mean of the images of the negative points to the mean of the images of the positive points in the RHKS. It was noted in \cite{sim} that such a notion of goodness is too strong (\cite{sim} actually call this the \emph{strongly-good} notion of similarity goodness) and that there exist kernels that are very good with respect to the learning task at hand but the uniform vectors $\vecw = \EE{(\vecx,y) \sim \D}{y'\Phi_{\H_{K_\vecmu}}(\vecx)}$ in their RKHSes perform poorly (see \cite[Definition 2]{sim} and the discussion thereafter).

Thus it seems unlikely that the current proof technique can be extended to accept promises of kernel goodness. The technique seems inherently suited to accept combination goodness and output good kernels. It would be interesting to see whether the existing proofs can be modified or whether the algorithms can be modified to accommodate kernel goodness.

\bibliographystyle{plain}
\bibliography{ref}
\end{document}